\newcommand{\F}{\mathcal{F}}
\newcommand{\E}{\mathcal{E}}
\renewcommand{\P}{\mathcal{P}}
\renewcommand{\S}{\mathcal{S}}
\newcommand{\R}{\mathbb{R}}
\newcommand{\B}{\mathcal{B}}
\newcommand{\error}{\varepsilon}
\newcommand{\size}[1]{\vert #1 \vert}
\newcommand{\secref}[1]{Section~\ref{sec:#1}}
\newcommand{\appref}[1]{Appendix~\ref{app:#1}}
\renewcommand{\eqref}[1]{Equation~\ref{eq:#1}}
\newcommand{\thmref}[1]{Theorem~\ref{thm:#1}}
\newcommand{\defref}[1]{Definition~\ref{def:#1}}
\newcommand{\allf}[1]{\mathcal{#1}^{\star, n}}
\newcommand{\allasp}{\mathcal{S}_{ASP}^{\star, n}}
\newcommand{\seq}[1]{({#1}_i)_{i=1}^{k}}
\begin{document}

\title{On the Bounds of Function Approximations}

% The "author" command and its associated commands are used to define the authors and their affiliations.
% Of note is the shared affiliation of the first two authors, and the "authornote" and "authornotemark" commands used to denote shared contribution to the research.
\author{Adrian de Wynter\inst{1}\orcidID{0000-0003-2679-7241}}
\authorrunning{A. de Wynter}

\institute{$^1$Amazon Alexa, 300 Pine St., Seattle, Washington, USA 98101 \\
  \email{dwynter@amazon.com}
}
\maketitle

% The abstract is a short summary of the work to be presented in the article.
\begin{abstract}\footnote{
Citation details:
de Wynter, Adrian. On the Bounds of Function Approximations. In: Tetko, I. V. et al. (eds.) ICANN 2019. 
LNCS, vol 11727. Springer, Heidelberg, pp. 1–17. https://doi.org/10.1007/978-3-030-30487-4{\_}32 \\
The final authenticated publication is available online at https://doi.org/10.1007/978-3-030-30487-4{\_}32
}

Within machine learning, the subfield of Neural Architecture Search (NAS) has recently garnered research attention due to its ability to improve upon human-designed models. However, the computational requirements for finding an exact solution to this problem are often intractable, and the design of the search space still requires manual intervention. In this paper we attempt to establish a formalized framework from which we can better understand the computational bounds of NAS in relation to its search space. For this, we first reformulate the function approximation problem in terms of sequences of functions, and we call it the Function Approximation (FA) problem; then we show that it is computationally infeasible to devise a procedure that solves FA for all functions to zero error, regardless of the search space. We show also that such error will be minimal if a specific class of functions is present in the search space. 
Subsequently, we show that machine learning as a mathematical problem is a solution strategy for FA, albeit not an effective one, and further describe a stronger version of this approach: the Approximate Architectural Search Problem (a-ASP), which is the mathematical equivalent of NAS. We leverage the framework from this paper and results from the literature to describe the conditions under which a-ASP can potentially solve FA as well as an exhaustive search, but in polynomial time.
\keywords{neural networks \and learning theory \and neural architecture search}
\end{abstract}

%%%%%%%%%%%%%%%%%%%%%%%%%%%
% Section 0				  %
%%%%%%%%%%%%%%%%%%%%%%%%%%%
\section{Introduction}
The typical machine learning task can be abstracted out as the problem of finding the set of parameters of a computable function, such that it approximates an underlying probability distribution to seen and unseen examples \cite{Goodfellow-et-al-2016}. Said function is often hand-designed, and the subject of the great majority of current machine learning research. 
It is well-established that the choice of function heavily influences its approximation capability \cite{ArchitecturesBengio,NFLWolpert,XinYao}, and considerable work has gone into automating the process of finding such function for a given task \cite{CarpenterAndGrossberg,Carvalho,Golovin2017GoogleVA}. 
In the context of neural networks, this task is known as Neural Architecture Search (NAS), and it involves searching for the best performing combination of neural network components and parameters from a set, also known as the \emph{search space}. Although promising, little work has been done on the analysis of its viability with respect to its computation-theoretical bounds \cite{Elsken2018NeuralAS}. 
Since NAS strategies tend to be expensive in terms of their hardware requirements \cite{Jin2018AutoKerasEN,Real2017LargeScaleEO}, research emphasis has been placed on optimizing search algorithms, \cite{Elsken2018NeuralAS,MetaDesignOfFFNNs}, even though the search space is still manually designed \cite{Elsken2018NeuralAS,liu2018hierarchical,liu2019darts,Zoph2016NeuralAS}. Without a better understanding of the mathematical confines governing NAS, it is unlikely that these strategies will efficiently solve new problems, or present reliably high performance, thus leading to complex systems that still rely on manually engineering architectures and search spaces. 

Theoretically, learning has been formulated as a function approximation problem where the approximation is done through the optimization of the parameters of a given function \cite{CybenkoSigmoids,Goodfellow-et-al-2016,PoggioTheoryOfNets,PoggioNetworks,Valiant1984ATO}; and with strong results in the area of neural networks in particular \cite{CybenkoSigmoids,FunahashiApprox,HornikApprox2,ShaferRNNs}. On the other hand, NAS is often regarded as a search problem with an optimality criterion \cite{Carvalho,Elsken2018NeuralAS,Real2017LargeScaleEO,SunEtAl,XinYao}, within a given search space. The choice of such search space is critical, yet strongly heuristic \cite{Elsken2018NeuralAS}. Since we aim to obtain a better insight on how the process of finding an optimal architecture can be improved with relation to the search space, we hypothesize that NAS can be enunciated as a function approximation problem. 
The key observation that motivates our work is that all computable functions can be expressed in terms of combinations of members of certain sets, better known as models of computation. Examples of this are the $\mu$-recursive functions, Turing Machines, and, of relevance to this paper, a particular set of neural network architectures \cite{Neto1997TuringUO}. 

Thus, in this study we reformulate the function approximation problem as the task of, for a given search space, finding the procedure that outputs the computable sequence of functions, along with their parameters, that best approximates any given input function. We refer to this reformulation as the Function Approximation (FA) problem, and regard it as a very general computational problem; akin to building a fully automated machine learning pipeline where the user provides a series of tasks, and the algorithm returns trained models for each input.\footnote{Throughout this paper, the problem of data selection is not considered, and is simply assumed to be an input to our solution strategies.} This approach yields promising results in terms of the conditions under which the FA problem has optimal solutions, and about the ability of both machine learning and NAS to solve the FA problem. 

\subsection{Technical Contributions}
The main contribution of this paper is a reformulation of the function approximation problem in terms of sequences of functions, and a framework within the context of the theory of computation to analyze it. Said framework is quite flexible, as it does not rely on a particular model of computation and can be applied to any Turing-equivalent model. We leverage its results, along with well-known results of computer science, to prove that it is not possible to devise a procedure that approximates all functions everywhere to zero error. However, we also show that, if the smallest class of functions along with the operators for the chosen model of computation are present in the search space, it is possible to attain an error that is globally minimal. 

Additionally, we tie said framework to the field of machine learning, and analyze in a formal manner three solution strategies for FA: the Machine Learning (ML) problem, the Architecture Search problem (ASP), and the less-strict version of ASP, the Approximate Architecture Search problem (a-ASP). We analyze the feasibility of all three approaches in terms of the bounds described for FA, and their ability to solve it. In particular, we demonstrate that ML is an ineffective solution strategy for FA, and point out that ASP is the best approach in terms of generalizability, although it is intractable in terms of time complexity. 
Finally, by relating the results from this paper, along with the existing work in the literature, we describe the conditions under which a-ASP is able to solve the FA problem as well as ASP.

\subsection{Outline}
We begin by reviewing the existing literature in \secref{related_work}. In \secref{function_approxs} we introduce FA, and analyze the general properties of this problem in terms of its search space. Then, in \secref{asp_intro} we relate the framework to machine learning as a mathematical problem, and show that it is a weak solution strategy for FA, before defining a stronger approach (ASP) and its computationally tractable version (a-ASP). We conclude in \secref{conclusions} with a discussion of our work. 

%%%%%%%%%%%%%%%%%%%%%%
% Related Work       %
%%%%%%%%%%%%%%%%%%%%%%
\section{Related Work}\label{sec:related_work}
The problem of approximating functions and its relation to neural networks can be found formulated explicitly in \cite{PoggioNetworks}, and it is also mentioned often when defining machine learning as a task, for example in \cite{Bartlett,bendavidzfc,ArchitecturesBengio,Goodfellow-et-al-2016,Valiant1984ATO}. However, it is defined as a parameter optimization problem for a predetermined function. This perspective is also covered in our paper, yet it is much closer to the ML approach than to FA. For FA, as defined in this paper, it is central to find the sequence of functions which minimizes the approximation error. 

Neural networks as function approximators are well understood, and there is a trove of literature available on the subject. An inexhaustive list of examples are the studies found in \cite{CybenkoSigmoids,FunahashiApprox,HornikApprox2,HornikApprox,LeshnoMLP,ParkAndSandberg,pmlr-v80-pham18a,PoggioNetworks,ShaferRNNs,SiegelAndXu,SunEtAl}. It is important to point out that the objective of this paper is not to prove that neural networks are function approximators, but rather to provide a theoretical framework from which to understand NAS in the contexts of machine learning, and computation in general. However, neural networks were shown to be Turing-equivalent in \cite{Neto1997TuringUO,Siegelmann1991TuringCW,Siegelmann1995OnTC}, and thus they are extremely relevant this study. 

NAS as a metaheuristic is also well-explored in the literature, and its application to deep learning has been booming lately thanks to the widespread availability of powerful computers, and interest in end-to-end machine learning pipelines. There is, however, a long standing body of research on this area, and the list of works presented here is by no means complete. Some papers that deal with NAS in an applied fashion are the works found in \cite{AngelineetAl,CarpenterAndGrossberg,Carvalho,Luo2018NeuralAO,SchafferCaruana,stanley:naturemi19,StanleyEvolvingNNs,NIPS1988_149}, while explorations in a formal fashion of NAS and metaheuristics in general can also be found in \cite{Baxter,Carvalho,SiegelAndXu,MetaOptimizationAlgoAnalysis,XinYao}. 
There is also interest on the problem of creating an end-to-end machine learning pipeline, also known as AutoML. Some examples are studies such as the ones in \cite{NIPS2015_5872,he2018amc,Jin2018AutoKerasEN,Wong:2018:TLN:3327757.3327928}. The FA problem is similar to AutoML, but it does not include the data preprocessing step commonly associated with such systems. Additionally, the formal analysis of NAS tends to be as a search, rather than a function approximation, problem. 

The complexity theory of learning and neural networks has been explored as well. The reader is referred to the recent survey from \cite{ComplexityTheoryNNs}, and \cite{Bartlett,BlumerLearnabilityVC,CybenkoTheory,RegularizationAndANN,Vapnik1995TheNO}. 
Leveraging the group-like structure of models of computation is done in \cite{RabinComputability}, and the Blum Axioms \cite{blum} are a well-known framework for the theory of computation in a model-agnostic setting. 
It was also shown in \cite{Bshouty} that, under certain conditions, it is possible to compose some learning algorithms to obtain more complex procedures. Bounds in terms of the generalization error was proven for convolutional neural networks in \cite{cnnsgoogle}. None of the papers mentioned, however, apply directly to FA and NAS in a setting agnostic to models of computation, and the key insights of our work, drawn from the analysis of FA and its solution strategies, are, to the best of our knowledge, not covered in the literature. 
Finally, the Probably Approximately Correct (PAC) learning framework \cite{Valiant1984ATO} is a powerful theory for the study of learning problems. It is a slightly different problem than FA, as the former has the search space abstracted out, while the latter concerns itself with finding a sequence that minimizes the error, by searching through combinations of explicitly defined members of the search space.

%%%%%%%%%%%%%%%%%%%%%%%%%%%
% Section 1				  %
%%%%%%%%%%%%%%%%%%%%%%%%%%%
\section{A Formulation of the Function Approximation Problem}\label{sec:function_approxs}

In this section we define the FA problem as a mathematical task whose goal is--informally--to find a sequence of functions whose behavior is closest to an input function. We then perform a short analysis of the computational bounds of FA, and show that it is computationally infeasible to design a solution strategy that approximates all functions everywhere to zero error. 

\subsection{Preliminaries on Notation}

Let $\mathbf{R}$ be the set of all total computable functions. Across this paper we will refer to the finite set of \emph{elementary functions} $\E = \{\psi^{1}, ..., \psi^{m} \}$ as the smallest class of functions, along with their operators, of some Turing-equivalent model of computation. 

Let $S = \{\phi_{j} \colon dom(\phi_j) \rightarrow img(\phi_j) \}_{{j \in J}}$ be a set of functions defined over some sets $dom(\phi_j),img(\phi_j)$, such that $S$ is indexed by a set $J$, and that $S \subset \mathbf{R}$. Also let $f(x) = (\phi_{i_1}, \phi_{i_2}, ..., \phi_{i_k})(x)$ be a sequence of elements of $S$ applied successively and such that $i_1, ..., i_k \in I$ for some $I \subset J$. 
We will utilize the abbreviated notation $f =\seq{\phi}$ to denote such a sequence; and we will use $\allf{S} = \{ \seq{\phi} \vert \; \phi_{i} \in S, k\leq n\}$ to describe the set of all $n$-or-less long possible sequences of functions drawn from said $S$, such that $f \in \allf{S} \Leftrightarrow f \in \mathbf{R}$.

For consistency purposes, throughout this paper we will be using Zermelo-Fraenkel with the Axiom of Choice (ZFC) set theory. Finally, for simplicity of our analysis we will only consider continuous, real-valued functions, and beginning in \secref{search_spaces}, only computable functions.

\subsection{The FA Problem}

Prior to formally defining the FA problem, we must be able to quantify the behavioral similarity of two functions. This is done through the \emph{approximation error} of a function:

\begin{definition}[The approximation error]\label{def:approx_error}

Let $f$ and $g$ be two functions. Given a nonempty subset $\sigma \subset dom(g)$,
the \emph{approximation error} of a function $f$ to a function $g$ is a procedure which outputs $0$ if $f$ is equal to $g$ with respect to some metric $d \colon \R \times \R \rightarrow \R_{\geq 0}$ across all of $\sigma$, and a positive number otherwise:
\begin{equation}
\error_{\sigma}(f, g) = \frac{1}{\size{\sigma}}\sum\limits_{x \in \sigma} d(f(x), g(x))
\end{equation}

Where we assume that, for the case where $x \not\in dom(f)$, $d(f(x), g(x)) = g(x)$. 

\end{definition}

\begin{definition}[The FA Problem]\label{def:fa_problem}
For any input function $\F$, given a function set (the \emph{search space}) $\S$, an integer $n \in \mathbb{N}_{>0}$, and nonempty sets $\sigma \subset dom(\F)$, find the sequence of functions $f = \seq{\phi},\; \phi_{i} \in \S,\; k \leq n$, such that $\error_{\sigma}(f, \F)$ is minimal among all members of $\allf{S}$ and $\sigma$.
\end{definition}

The FA problem, as stated in \defref{fa_problem}, makes no assumptions regarding the characterization of the search space, and follows closely the definition in terms of optimization of parameters from \cite{PoggioTheoryOfNets,PoggioNetworks}. However, it makes a point on the fact that the approximation of a function should be given by a sequence of functions. 

If the input function were to be continuous and multivariate, we know from \cite{Kolmogorov,Ostrand} that there exists at least one exact (i.e., zero approximation error) representation in terms of a sequence of single-variable, continuous functions. If such single-variable, continuous functions were to be present in $\S$, one would expect that the FA problem could solved to zero error for all continuous multivariate inputs, by simply comparing and returning the right representation.\footnote{With the possible exception of the results from \cite{Vitushkin}.}
However, it is infeasible to devise a generalized algorithmic procedure that outputs such representation:

\begin{theorem}\label{thm:error_zero}

There is no computable procedure for FA that approximates all continuous, real-valued functions to zero error, across their entire domain.

\end{theorem}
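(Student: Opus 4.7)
The approach I would take is a straightforward cardinality argument combined with a basic observation about computability. The result should hold independently of the particular search space $\S$, so the bottleneck cannot be the expressiveness of $\S$; it must be the computability of the procedure itself.

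First, I would observe that any computable procedure, when viewed as a Turing machine (or any Turing-equivalent device), outputs only finite descriptions encoding a sequence $f = \seq{\phi}$ with each $\phi_i \in \S$. Since there are at most countably many finite descriptions over a finite alphabet, the set of sequences that such a procedure can ever return is at most countable, regardless of which $\S$ is fixed in advance.

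Second, I would argue that the set of continuous real-valued functions on any nondegenerate domain has cardinality $2^{\aleph_0}$. A quick witness is the uncountable family $\{x \mapsto cx\}_{c \in \R}$; alternatively, continuous functions are determined by their restrictions to a countable dense subset, matching the upper bound. Hence the collection of admissible inputs $\F$ strictly dominates, in cardinality, the collection of possible outputs.

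Third, I would conclude by pigeonhole: since the image of the procedure is countable but the set of continuous inputs is uncountable, there must exist a continuous $\F$ such that the returned sequence $f$ differs from $\F$ at some point $x_0 \in dom(\F)$, i.e., $d(f(x_0), \F(x_0)) > 0$. Choosing $\sigma = \{x_0\}$ in \defref{approx_error} then yields $\error_{\sigma}(f,\F) > 0$, contradicting the hypothesis that the procedure attains zero error across the entire domain.

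The main obstacle, to my mind, is interpretive rather than technical: the definition of $\error_\sigma$ is stated for (implicitly finite) subsets, so "zero error across the entire domain" needs to be read as $\error_\sigma(f,\F) = 0$ for every finite $\sigma \subset dom(\F)$, which is equivalent to pointwise agreement on $dom(\F)$. Once this reading is pinned down, the counting argument is immediate and, crucially, does not depend on which $\S$ is supplied to the procedure, which is the desired universality of the bound.
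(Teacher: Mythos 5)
Your argument is correct for the theorem as literally stated, but it follows a genuinely different route from the paper's. The paper proceeds by a case analysis on whether $n$, $\sigma$, and $\S$ are finite or infinite: it invokes the uncomputability of certain inputs, the undecidability of deciding whether two halting Turing machines accept the same language (to show one cannot certify that a candidate $f \in \allf{S}$ exactly matches a computable $\F$), and non-termination when the parameters are infinite. Your proof replaces all of this with a single counting argument: a computable procedure emits at most countably many finite descriptions, while the continuous real-valued functions form an uncountable class, so some $\F$ is never matched exactly and incurs positive error on a suitable $\sigma$. This is more elementary (no appeal to undecidability), uniform in $\S$, $n$, and $\sigma$, and arguably more rigorous than the paper's treatment of the infinite cases. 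What it does \emph{not} buy is the paper's stronger side-claim that the obstruction persists even when the inputs are restricted to computable functions: that subclass is countable, so your pigeonhole is silent there, whereas the paper's undecidability argument is exactly what addresses it. Two small points to tighten: (i) because of the convention $d(f(x), g(x)) = g(x)$ for $x \notin dom(f)$, your witness family should be chosen so that distinct members disagree at points where they are nonzero (e.g., the nonzero constant functions), so the disagreement genuinely forces $\error_{\sigma}(f,\F) > 0$; and (ii) your reading of ``zero error across the entire domain'' as $\error_{\sigma}(f,\F) = 0$ for every finite $\sigma \subset dom(\F)$ is the right disambiguation and is worth stating explicitly, as you do.
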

\begin{proof}
Solution strategies for FA are parametrized by the sequence length $n$, the subset of the domain $\sigma$, and the search space $\S$. 

Assume $\S$ is infinite. 
The input function $\F$ may be either computable or uncomputable. 
If the input $\F$ is uncomputable, by definition it can only be estimated to within its computable range, and hence its approximation error is nonzero. 
If $\F$ is a computable function, we have guaranteed the existence of at least one function within $\allf{S}$ which has zero approximation error: $\F$ itself. 
Nonetheless, determining the existence of such a function is an undecidable problem. To show this, it suffices to note that it reduces to the problem of determining the equivalence of two halting Turing Machines by asking whether they accept the same language, which is undecidable. 

When $n$ or $\sigma$ are infinite, there is no guarantee that a procedure solving FA will terminate for all inputs. 

When $n$, $\sigma$, or $\S$ are finite, there will always be functions outside of the scope of the procedure that can only be approximated to a nonzero error.

Therefore, there cannot be a procedure for FA that approximates all functions, let alone all computable functions, to zero error for their entire domain.
\qed
\end{proof}

It is a well-known result of computer science that neural networks \cite{CybenkoSigmoids,FunahashiApprox,Goodfellow-et-al-2016,HornikApprox2,HornikApprox}, and PAC learning algorithms \cite{Valiant1984ATO}, are able to approximate a large class of functions to an arbitrary, non-zero error. However, \thmref{error_zero} does not make any assumptions regarding the model of computation used, and thus it works as more generalized statement of these results. 

For the rest of this paper we will limit ourselves to the case where $n$, $\sigma$, and $\S$ are finite, and the elements of $\S$ are computable functions. 

\subsection{A Brief Analysis of the Search Space}\label{sec:search_spaces}

It has been shown that the solutions to FA can only be found in terms of finite sequences built from a finite search space, whose error with respect to the input function is nonzero. 
It is worth analyzing under which conditions these sequences will present the smallest possible error. 

For this, we note that any solution strategy for FA will have to first construct at least one sequence $f \in \allf{S}$, and then compute its error against the input function $\F$. It could be argued that this "bottom-up" approach is not the most efficient, and one could attempt to "factor" a function in a given model of computation that has explicit reduction formulas, such as the Lambda calculus. This, unfortunately, is not possible, as the problem of determining the reduction of a function in terms of its elementary functions is well-known to be undecidable \cite{ChurchAnUP}. 
However, the idea of "factoring" a function can still be leveraged to show that, if the set of elementary functions $\E$ is present in the search space $\S$, any sufficiently clever procedure will be able to get the smallest possible theoretical error for $\S$, for any given input function $\F$:

\begin{theorem}\label{thm:lower_bound}

Let $\S$ be a search space such that it contains the set of elementary functions, $\E \subset \S$. Then, for any input function $\F$, there exists at least one sequence $f_o \in \allf{S}$ with the smallest approximation error among all possible computable functions of sequence length up to and including $n$. 
\end{theorem}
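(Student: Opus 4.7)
The plan is to exhibit $f_o$ as an argmin of the approximation error over a finite sub-collection of $\allf{S}$ consisting entirely of sequences built from the elementary functions.

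First I would invoke the characterization of $\E$ as the elementary functions and operators of a Turing-equivalent model of computation. This guarantees that every computable function admits at least one representation as a (possibly long) sequence over $\E$; in particular, the class of computable functions expressible as sequences of length at most $n$ in the chosen model is captured exactly by the set $\allf{E} = \{\seq{\phi} : \phi_{i} \in \E,\; k \leq n\}$. Since $|\E| = m$ is finite, the cardinality of $\allf{E}$ is bounded above by $\sum_{k=1}^{n} m^{k}$ and is therefore itself finite.

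Next I would combine this with the hypothesis $\E \subset \S$ to obtain $\allf{E} \subset \allf{S}$, so that $\allf{S}$ contains at least one representative for every computable function of sequence length at most $n$. The set of errors $\{\error_{\sigma}(f, \F) : f \in \allf{E}\}$ is then a finite subset of $\R_{\geq 0}$, and consequently attains its minimum. Any $f_o \in \allf{E}$ realizing that minimum serves as the desired element of $\allf{S}$ with the smallest approximation error among all computable functions of sequence length up to $n$.

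The main obstacle is the careful justification of the identification between "computable functions of sequence length up to and including $n$" and the elements of $\allf{E}$. This rests on the Turing-equivalence of the underlying model together with the role of $\E$ as the \emph{smallest} generating class: without the minimality of $\E$, a competing primitive set could in principle represent some target more efficiently than any member of $\allf{E}$, breaking the universality of the argument. Once this identification is in place, the rest of the proof is just the elementary observation that a minimum over a finite set of nonnegative reals always exists and is attained, and so the "sufficiently clever procedure" alluded to before the statement can be taken to be the (computationally expensive, but finite) brute-force enumeration of $\allf{E}$.
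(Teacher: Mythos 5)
Your proof is correct in substance but takes a genuinely more explicit route than the paper. The paper's argument is a two-sentence contrapositive-flavored remark: $\E$ generates all computable functions, and if $\E \not\subset \S$ then $\size{\allf{S}} < \size{\allf{E}}$, so some input functions would have their optimal sequence $f_o$ missing from $\allf{S}$; the existence claim itself is left implicit. You instead give a direct existence proof: identify the comparison class "computable functions of sequence length up to $n$" with $\allf{E}$, observe $\allf{E} \subset \allf{S}$ by hypothesis, bound $\size{\allf{E}} \leq \sum_{k=1}^{n} m^{k}$, and conclude that the finite set of errors $\{\error_{\sigma}(f,\F) : f \in \allf{E}\}$ attains its minimum. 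This buys you something the paper does not supply: an actual witness for $f_o$ and a reason the minimum is attained rather than merely approached, and it makes visible the standing finiteness assumptions ($n$, $\sigma$, $\S$ finite) on which the claim rests. You also correctly flag the real load-bearing step, namely that the theorem's comparison class must be read relative to $\E$, which the paper glosses over entirely.

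One refinement you should make: you take the argmin over $\allf{E}$ only, but $\allf{S}$ may contain sequences built from non-elementary members of $\S$ that achieve strictly smaller error within the same length budget $n$ than anything in $\allf{E}$ (a richer primitive can pack more computation into $k \leq n$ steps than a chain of elementary functions can). Your $f_o$ is then optimal relative to $\allf{E}$ but not necessarily relative to all of $\allf{S}$. Since $\S$ is finite by the paper's standing assumption, $\allf{S}$ is finite as well, and the identical attainment argument applies; taking the argmin over all of $\allf{S}$ costs nothing and makes $f_o$ optimal over the largest class the theorem could plausibly intend. With that one-line change the proof is complete.
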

\begin{proof}
By definition, $\E$ can generate all possible computable functions. If $\E \not\subset \S$, then $\size{\allf{S}} < \size{\allf{E}}$, and so there exist input functions whose sequence with the smallest approximation error, $f_o$, is not contained in $\allf{S}$. 
\qed
\end{proof}

In practice, constructing a space that contains $\E$, and subsequently performing a search over it, can become a time consuming task given that the number of possible members of $\allf{S}$ grows exponentially with $n$. On the other hand, constructing a more "efficient" space that already contains the best possible sequence requires prior knowledge of the structure of a function relating $\S$ to $\F$--the problem that we are trying to solve in the first place. 
That being said, \thmref{lower_bound} implies that there must be a way to quantify the ability of a search space to generalize to any given function, without the need of explicitly including $\E$. To achieve this, we first look at the ability of every sequence to approximate a function, by defining the \emph{information capacity} of a sequence:

\begin{definition}[The Information Capacity]\label{def:info_cap}
Let $f = (\phi_i)_{i=1}^{n}$ be a finite sequence, where every $\phi_i$ has associated a finite set of possible parameters $\pi_i$, and a restriction set $\rho_i$ in its domain: $\phi_i \colon dom(\phi_i) \times \pi_i \rightarrow img(\phi_i) \setminus \rho_i$, so that the next element in the sequence is a function $\phi_{i+1}$ with $dom(\phi_{i+1}) = img(\phi_i) \setminus \rho_i$. 

Then the \emph{information capacity} of a sequence $f$ is given by the Cartesian product of the domain, parameters, and range of each $\phi_{i}$:

\begin{equation}
C(f) = dom(\phi_1) \times \big( \prod\limits_{i = 1}^{n - 1} \pi_i \times (img(\phi_i) \setminus \rho_i) \big) \times \pi_n \times img(\phi_n)
\end{equation}
\end{definition}

Note that the information capacity of a function is quite similar to its graph, but it makes an explicit relationship with its parameters. Specifically, in the case where $\pi_i \subset \Pi$ for every $\pi_i$ in some $f$, $C(f) = dom(\phi_1) \times \Pi \times img(\phi_n)$.

At a first glance, \defref{info_cap} could be seen as a variant of the VC dimension \cite{BlumerLearnabilityVC,Vapnik1995TheNO}, since both quantities attempt to measure the ability of a given function to generalize. However, the latter is designed to work on a fixed function, and our focus is on the problem of building such a function. A more in-depth discussion of this distinction, along with its application to the framework from this paper, is given in \secref{ml_context}, and in \appref{vc_dim}.

A search space is comprised of one or more functions, and algorithmically we are more interested about the quantifiable ability of the search space to approximate any input function. 
Therefore, we define the \emph{information potential} of a search space as follows:

\begin{definition}[The Information Potential]\label{def:info_pot}
The \emph{information potential of a search space} $\S$, is given by all the possible values its members can take for a given sequence length $n$:
\begin{equation}
U(\S, n) = \bigcup\limits_{f \in \allf{S}} C(f)
\end{equation}
\end{definition}

The definition of the information potential allows us to make the important distinction between comparing two search spaces $S_1, S_2$ containing the same function $f$, but defined over different parameters $\pi_1, \pi_2 \subset \Pi$; and comparing $S_1$ and $S_2$ with another space, $S_3$, containing a different function $g$: the information potentials will be equivalent on the first case, $U(S_1, n) = U(S_2, n)$, but not on the second: $U(S_3, n) \neq U(S_1, n)$.

For a given space $\S$, as the sequence length $n$ grows to infinity, and if the search space includes the set of elementary functions, $\E \subset \S$, its information potential encompasses all computable functions:
\begin{equation}\label{eq:unlimited_potential}
\lim\limits_{n\rightarrow \infty} U(\S, n) = \mathbf{R}
\end{equation}
In other words, the information potential of such $\S$ approaches the information capacity of a universal approximator, which depending on the model of computation chosen, might be a universal Turing machine, or the universal function from \cite{RogersComputability}, to name a few.

In the next section, we leverage the results shown so far to evaluate three different procedures to solve FA, and show that there exists a best possible solution strategy. 

%%%%%%%%%%%%%%%%%%%%%%%%%%%
% Section 2				  %
%%%%%%%%%%%%%%%%%%%%%%%%%%%

\section{The FA Problem in the Context of Machine Learning}\label{sec:asp_intro}

In this section we relate the results from analyzing FA to the field of machine learning. First, we show that the machine learning task can be seen as a solution strategy for FA. We then introduce the Architecture Search Problem (ASP) as a theoretical procedure, and note that it is the best possible solution strategy for FA. Finally, we note that ASP is unviable in an applied setting, and define a more relaxed version of this approach: the Approximate Architecture Search Problem (a-ASP), which is the analogous of the NAS task commonly seen in the literature. 

\subsection{Machine Learning as a Solver for FA}\label{sec:ml_context}

The Machine Learning (ML) problem, informally, is the task of approximating an input function $\F$ through repeated sampling and the parameter search of a predetermined function. This definition is a simplified, abstracted out version of the typical machine learning task. It is, however, not new, and a brief search in the literature (\cite{bendavidzfc,ArchitecturesBengio,Goodfellow-et-al-2016,PoggioTheoryOfNets}) can attest to the existence of several equivalent formulations. We reproduce it here for notational purposes, and constrain it to computable functions:

\begin{definition}[The ML Problem]\label{def:ml_problem}
For an unknown, continuous function $\F$ defined over some domain $dom(\F)$, given finite subsets $\sigma \subset dom(\F)$, a function $f$ with parameters from some finite set $\Pi$, and a function $m \colon \R \times \R \rightarrow \R_{\geq 0}$, find a $\pi_o \in \Pi$ such that $m(f(x, \pi_o), \F(x))$ is minimal for all $x\in \sigma$.
\end{definition}

As defined in \defref{fa_problem}, any procedure solving FA is required to return the sequence that best approximates any given function. In the ML problem, however, such sequence $f$ is already given to us. Even so, we can still reformulate ML as a solution strategy for FA.
For this, let the search space be a singleton of the form $\S_{ML} = \{ f \}$; set $m$ to be the metric function $d$ in the approximation error; and leave $\sigma$ as it is. We then carry out a "search" over this space by simply picking $f$, and then optimizing the parameters of $f$ with respect to the approximation error $\error_\sigma(f, \F)$. We then return the function along with the parameters $\pi_o$ that minimize the error.

Given that the search is performed over a single element of the search space, this is not an effective procedure in terms of generalizability. 
To see this, note that the procedure acts as intended, and "finds" the function that minimizes the approximation error $\error_\sigma(f, \F)$ between $f$ and any other $\F$ in the search space $\S_{ML}$. 
However, being able to approximate an input function $\F$ in a single-element search space tells us nothing about the ability of ML to approximate other input functions, or even whether such $f \in \S_{ML}$ is the best function approximation for $\F$ in the first place. In fact, we know by \thmref{lower_bound} that for a given sequence length $n$, for every $\F$ there exists an optimal sequence $f_o$ in $\allf{E}$, which is may not be present in $\S_{ML}$.

Since we are constrained to a singleton search space, one could be tempted to build a search space with one single function that maximizes the information potential, such as the one as described in \eqref{unlimited_potential}, say, by choosing $f$ to be a universal Turing Machine. There is one problem with this approach: this would mean that we need to take in as an input the encoding of the input function $\F$, along with the subset of the domain $\sigma$. If we were able to take the encoding of $\F$ as part of the input, we would already know the function and this would not be a function approximation problem in the first place. Additionally, 
we would only be able to evaluate the set of computable functions which take in as an argument their own encoding, as it, by definition, needs to be present in $\sigma$.

In terms of the framework from this paper we can see that, no matter how we optimize the parameters of $f$ to fit new input functions, the information potential $U(\S_{ML}, n)$ remains unchanged, and the error will remain bounded. 
This leads us to conclude that measuring a function's ability to learn through its number of parameters \cite{Goodfellow-et-al-2016,SontagVC,Vapnik1995TheNO} is a good approach for a fixed $f$ and single input $\F$, but incomplete in terms of describing its ability to generalize to other problems. 
This is of critical importance, because, in an applied setting, even though nobody would attempt to use the same architecture for all possible learning problems, the choice of $f$ remains a crucial, and mostly heuristic, step in the machine learning pipeline.

The statements regarding the information potential of the search space are in accordance with the results in \cite{NFLWolpert}, where it was shown that--in the terminology of this paper--two predetermined sequences $f$ and $f'$, when averaging their approximation error across all possible input functions, will have equivalent performance.
We have seen that ML is unable to generalize well to any other possible input function, and is unable to determine whether the given sequence $f$ is the best for the given input.
This leads us to conclude that, although ML is a computationally tractable solution strategy for FA, it is a weak approach in terms of generalizability. 

\subsection{The Architecture Search Problem (ASP)}

We have shown that ML is a solution strategy for FA, although the nature of its search space makes it ineffective in a generalized setting. It is only natural to assume that a stronger formulation of a procedure to solve FA would involve a more complex search space. 

Similar to \defref{ml_problem}, we are given the task of approximating an unknown function $\F$ through repeated sampling. Unlike ML, however, we are now able to select the sequence of functions (i.e., architecture) that best fits a given input function $\F$:
\begin{definition}[The Architecture Search Problem (ASP)]\label{def:asp}
For an unknown, continuous function $\F$ defined over some domain $dom(\F)$, given a finite subset $\sigma \subset dom(\F)$, a sequence length $n$, a search space $\S_{ASP}$, and a function $m \colon \R \times \R \rightarrow \R_{\geq 0}$, find the sequence $f = \seq{\phi},\; \phi_{i} \in \S_{ASP}$, $k \leq n$ such that $m(f(x), \F(x))$ is minimal for all $x\in \sigma$, and all $f \in \allasp$. 
\end{definition}

Note that we have left the parameter optimization problem implicit in this formulation, since, as pointed out in \secref{ml_context}, a single-function search space $f$ would be ineffective for dealing with multiple input functions $\F$, no matter how well the optimizer performed for a given subset of these inputs. 

At a first glance, ASP looks similar to the PAC learning framework \cite{Valiant1984ATO}. However, FA is the task about finding the right sequence of computable functions for all possible functions, while PAC is a generalized, tractable formulation of learning problems, with the search space abstracted out. A more precise analysis of the relationship between FA and PAC is described in \appref{asp_pac}. 

As a solution strategy for FA, ASP is also subject to the results from section \secref{function_approxs}. The key difference between ML and ASP is that ASP has access to a richer search space, which allows it to have a better approximation capability. 
In particular, ASP could be seen as a generalized version of the former, since for any $n$-sized sequence present in $\S_{ML}$, one could construct a space with bigger information potential in ASP, but with the same constrains in sequence length. For example, we could use $\E$ as our search space, choose a sequence length $n$, and so $U(\S_{ML}, n) \subset U(\E, n)$. 

Since ASP has no explicit constraints on time and space, this procedure is essentially performing an exhaustive search. \thmref{lower_bound} implies that, for fixed $n$ and any input $\F$, ASP will always return the best possible sequence within that space, as long as the search space contains the set of elementary functions, $\E \subset \S$.

On the other hand, it is a cornerstone of the theory and practice of machine learning that learning algorithms must be tractable--that is, they must run in polynomial time. Given that the search space for ASP grows exponentially with the sequence length, this approach is an interesting theoretical tool, but not very practical. We will still use ASP as a performance target for the evaluation of more applicable procedures. 
However, it is desirable to formulate a solution strategy for FA that can be used in an applied setting, but can also be analyzed within the framework of this paper. 

To achieve this, first we note that any other solution strategy for FA which terminates in polynomial time will have to be able to avoid verifying every possible function in the search space. In other words, such procedure would require a function that is able to choose a nonempty subset of the search space. We denote such function as $\B$, such that for a search space $\S$, $\B(\S) \subset \allf{S}$.
We can now define the Approximate Architecture Search Problem (a-ASP) as the formulation of NAS in terms of the FA framework:

\begin{definition}[The Approximate ASP (a-ASP)]\label{def:a_asp}
If $\F$ is an unknown, continuous function defined over some domain $dom(\F)$, given a finite subset $\sigma \subset dom(\F)$, a sequence length $n$, a search space $\S_{ASP}$, a function $m \colon \R \times \R \rightarrow \R_{\geq 0}$, 
and a set builder function $\mathcal{B}(\S_{ASP}) \subset \allasp$, 
find the sequence $f = \seq{\phi},\; \phi_{i} \in \B(\S_{ASP})$, $k \leq n$ such that 
$m(f(x), \F(x))$ is minimal for all $x \in \sigma$ and $f \in \mathcal{B}(\S_{ASP})$.
\end{definition}

Just as the previous two procedures we defined, a-ASP is also a solution strategy for FA. The only difference between \defref{asp} and \defref{a_asp} is the inclusion of the set builder function to traverse the space in a more efficient manner. 
Due to the inclusion of this function, however, a-ASP is weaker than ASP, since it is not guaranteed to find the functions $f_o$ that globally minimizes $\error_\sigma(f_o, \F)$, for all given $\F$. 
Additionally, the fact that this function must be included into the parameters for a-ASP implies that such procedure requires some design choices. Given that everything else in the definition of a-ASP is equivalent to ASP, it can be stated that the set builder function is the only deciding factor when attempting to match the performance of ASP with a-ASP.

It has been shown \cite{Wolpert2005CoevolutionaryFL} that certain set builder functions perform better than others in a generalized setting. 
This can be also seen from the perspective of the FA framework, where we have available at our disposal the sequences that make up a given function. 
In particular, if $\S = \{\phi_1, ..., \phi_m\}$ is a search space, and $\B$ is a function that selects elements from $\allf{S}$, a-ASP not only has access to the performance of all the $k$ sequences chosen so far, $\{\error_\sigma(f_i, F), f_i \in \B(\allf{S})\}_{i \in \{1, ..., k\}}$, but also the encoding (the configurations from \cite{Wolpert2005CoevolutionaryFL}) of their composition. 
This means that, given enough samples, when testing against a subset of the input, $\sigma' \subset \sigma$, such an algorithm would be able to learn the expected output $\phi(s)$ of the functions $\phi \in \S$, and their behavior if included in the current sequence $f_{k+1} = (f_k, \phi)(s)$, for $s \in \sigma'$. Including such information in a set builder function could allow the procedure to make better decisions at every step, and this approach has been used in applied settings with success \cite{MillerAndHedge,liu2018hierarchical}. 

It can be seen that these design choices are not necessarily problem-dependent, and, from the results of \thmref{lower_bound}, they can be done in a theoretically motivated manner. 
Specifically, we note that the information potential of the search space remains unchanged between a-ASP and ASP, and so, by including $\E$, a-ASP could have the ability to perform as well as ASP.

\section{Conclusion}\label{sec:conclusions}
The FA problem is a reformulation of the problem of approximating any given function, but with finding a sequence of functions as a central aspect of the task. In this paper, we analyzed its properties in terms of the search space, and its applications to machine learning and NAS. In particular, we showed that it is impossible to write a procedure that solves FA for any given function and domain with zero error, but described the conditions under which such error can be minimal. 
We leveraged the results from this paper to analyze three solution strategies for FA: ML, ASP, and a-ASP. 
Specifically, we showed that ML is a weak solution strategy for FA, as it is unable to generalize or determine whether the sequence used is the best fit for the input function. We also pointed out that ASP, although the best possible algorithm to solve for FA, is intractable in an applied setting. 

We finished by formulating a solution strategy that merged the best of both ML and ASP, a-ASP, and pointed out, through existing work in the literature, complemented with the results from this framework, that it has the ability to solve FA as well as ASP in terms of approximation error.

One area that was not discussed in this paper was whether it would be possible to select \emph{a priori} a good subset $\sigma$ of the input function's domain. This problem is important since a good representative of the input will greatly influence a procedure's capability to solve FA. This is tied to the data selection process, and it was not dealt with on this paper. Further research on this topic is likely to bear great influence on machine learning as a whole.

\section*{Acknowledgments}
The author is grateful to the anonymous reviewers for their helpful feedback on this paper, and also thanks Y. Goren, Q. Wang, N. Strom, C. Bejjani, Y. Xu, and B. d'Iverno for their comments and suggestions on the early stages of this project. 

\bibliography{de_wynter}

\appendix
\section*{Appendices}

\section{PAC Is a Solver for FA}\label{app:asp_pac}

PAC learning, as defined by Valiant \cite{Valiant1984ATO}, is a slightly different problem than FA, as it concerns itself with whether a \emph{concept class} $C$ can be described with high probability with a member of a \emph{hypothesis class} $H$. It also establishes bounds in terms of the amount of samples from members $c \in C$ that are needed to learn $C$.
On the other hand, FA and its solution strategies concern themselves with finding a solution that minimizes the error, by searching through sequences of explicitly defined members drawn from a search space.

Regardless of these differences, PAC learning as a procedure can still be formulated as a solution strategy for FA. To do this, let $H$ be our search space.
Then note that the PAC error function $e_{pac}(h, c) = Pr_{x\sim\P}[h(x) \neq c(x)],\; c \in C,\;h \in H$, is equivalent to computing $\error_\sigma(h, c)$ for some subset $\sigma \subset dom(c)$, and choosing the frequentist difference between the images of the functions as the metric $d$.
Our objective would be to return the $h \in H$ that minimizes the approximation error for a given subset $\sigma \subset C$. Note that we do not search through the expanded search space $H^{\star, n}$.

Finding the right distribution for a specific class may be NP-hard \cite{BlumerLearnabilityVC}, and so $e_{pac}$ requires us to make certain assumptions about the distribution of the input values. Additionally, any optimizer for PAC is required to run in polynomial time. Due to all of this, PAC is a weaker approach to solve FA when compared to ASP, but stronger than ML since this solution strategy is fixed to the design of the search space, and not to the choice of function. Nonetheless, it must be stressed that the bounds and paradigms provided by PAC and FA are not mutually exclusive, either: the most prominent example being that PAC learning provides conditions under which the choice subset $\sigma$ is optimal.

With the polynomial constraint for PAC learning lifted, and letting the sample and search space sizes grow infinitely, PAC is effectively equivalent to ASP. However, that defies the purpose of the PAC framework, as its success relies on being a tractable learning theory.

\section{The VC Dimension and the Information Potential}\label{app:vc_dim}

There is a natural correspondence between the VC dimension \cite{BlumerLearnabilityVC,Vapnik1995TheNO} of a hypothesis space, and the information capacity of a sequence.

To see this, note that the VC dimension is usually defined in terms of the set of concepts (i.e., the input function $\F$) that can be shattered by a predetermined function $f$ with $img(f) = \{0,1\}$. It is frequently used to quantify the ability of a procedure to learn the input function $\F$.

In the FA framework we are more interested in whether the search space--also a set--of a given solution strategy is able to generalize well to multiple, unseen input functions. 
Therefore, for fixed $\F$ and $f$, the VC dimension and its variants provide a powerful insight on the ability of an algorithm to learn. 
When $f$ is not fixed, it is still possible to utilize this quantity to measure the capacity of a search space $\S$, by simply taking the union of all possible $f \in \allf{S}$ for a given $n$. 
However, when the the input functions are not fixed either, we are unable to use the definition of VC dimension in this context, as the set of input concepts is unknown to us. We thus need a more flexible way to model generalizability, and that is where we leverage the information potential $U(\S, n)$ of a search space.

\end{document}